\definecolor{headerbg}{HTML}{FFFFFF} 
\definecolor{body1}  {HTML}{FFFFFF} 
\definecolor{body2}  {HTML}{FFFFFF} 
\newtheorem{theorem}{Theorem}[section]
\begin{document}

\newcommand{\method}[0]{AlphaEval}
\title{\method: A Comprehensive and Efficient Evaluation Framework for Formula Alpha Mining}

\author{Hongjun Ding}
\authornote{Both authors contributed equally to this research.}
\affiliation{%
  \institution{CUNY Baruch College}
  \city{New York}
  \country{USA}
}
\email{hongjun.ding.baruchmfe@gmail.com}

\author{Binqi Chen}
\authornotemark[1]
\authornote{State Key Laboratory for Multimedia Information Processing, School of Computer Science, PKU-Anker LLM Lab, Beijing Key Laboratory of Software and Hardware Cooperative Artificial Intelligence Systems, Peking University, Beijing, China}
\affiliation{
  \institution{Peking University}
  \city{Beijing}
  \country{China}
}
\email{cbq@stu.pku.edu.cn}
\author{Jinsheng Huang}
\authornotemark[2]
\affiliation{
  \institution{Peking University}
  \city{Beijing}
  \country{China}
}
\email{hjs@stu.pku.edu.cn}
\author{Taian Guo}
\authornotemark[2]
\affiliation{
  \institution{Peking University}
  \city{Beijing}
  \country{China}
}
\email{taianguo@stu.pku.edu.cn}
\author{Zhengyang Mao}
\authornotemark[2]
\affiliation{
  \institution{Peking University}
  \city{Beijing}
  \country{China}
}
\email{zhengyang.mao@stu.pku.edu.cn}

\author{Guoyi Shao}
\affiliation{%
  \institution{Peking University}
  \city{Beijing}
  \country{China}
}
\email{2100012950@stu.pku.edu.cn}

\author{Lutong Zou}
\affiliation{%
  \institution{Harvard University}
  \state{Massachusetts}
  \city{Cambridge}
  \country{USA}
}
\email{xjqrxjqr@gmail.com}

\author{Luchen Liu}
\affiliation{
\institution{Zhengren Research, Zhengren Quant}
\city{Haikou}
\state{Hainan}
  \country{China}
}
\email{liulc@zhengrenquant.com}
\author{Ming Zhang}
\authornotemark[2]
\authornote{Corresponding Author}
\affiliation{
  \institution{Peking University}
  \city{Beijing}
  \country{China}
}
\email{mzhang\_cs@pku.edu.cn}

\renewcommand{\shortauthors}{Hongjun Ding et al.}

\begin{abstract}
Formula alpha mining, which generates predictive signals from financial data, is critical for quantitative investment. Although various algorithmic approaches—such as genetic programming, reinforcement learning, and large language models—have significantly expanded the capacity for alpha discovery, systematic evaluation remains a key challenge. Existing evaluation metrics predominantly include backtesting and correlation-based measures. Backtesting is computationally intensive, inherently sequential, and sensitive to specific strategy parameters. Correlation-based metrics, though efficient, assess only predictive ability and overlook other crucial properties such as temporal stability, robustness, diversity, and interpretability. Additionally, the closed-source nature of most existing alpha mining models hinders reproducibility and slows progress in this field. To address these issues, we propose \textbf{AlphaEval}, a unified, parallelizable, and backtest-free evaluation framework for automated alpha mining models. AlphaEval assesses the overall quality of generated alphas along five complementary dimensions: predictive power, stability, robustness to market perturbations, financial logic, and diversity. Extensive experiments across representative alpha mining algorithms demonstrate that AlphaEval achieves evaluation consistency comparable to comprehensive backtesting, while providing more comprehensive insights and higher efficiency. Furthermore, AlphaEval effectively identifies superior alphas compared to traditional single-metric screening approaches. All implementations and evaluation tools are open-sourced to promote reproducibility and community engagement.
\end{abstract}

\begin{CCSXML}
<ccs2012>
   <concept>
       <concept_id>10002944.10011123.10011124</concept_id>
       <concept_desc>General and reference~Metrics</concept_desc>
       <concept_significance>500</concept_significance>
       </concept>
   <concept>
       <concept_id>10002944.10011123.10011130</concept_id>
       <concept_desc>General and reference~Evaluation</concept_desc>
       <concept_significance>500</concept_significance>
       </concept>
 </ccs2012>
\end{CCSXML}

\ccsdesc[500]{General and reference~Metrics}
\ccsdesc[500]{General and reference~Evaluation}

\keywords{Alpha Mining, Quantitative Finance, Backtest-free Evaluation}


\maketitle
\section{Introduction}
\label{sec:intro}

The automated mining of formula alpha is a central challenge in quantitative investment. Formula alpha\footnote{This paper subsequently uses \textbf{alpha} to refer to \textbf{formula alpha}.}, defined as computable expressions that transform raw financial data into signals predictive of future returns, have evolved from handcrafted models grounded in financial theory~\cite{fama_2015_five} to large-scale automated discovery. Recent developments include genetic programming~\cite{chen_2021_gp,zhang_autoalpha_2020, cui_alphaevolve_2021}, reinforcement learning~\cite{alphagen,xu_textalpha2_2024,zhao_quantfactor_2024,ren_riskminer_2024, zhu_alphaqcm,zhao2025learningexpertfactorstrajectorylevel}, generative adversarial networks~\cite{shi_alphaforge_2025}, and large language models (LLMs)~\cite{li-2024-fama,chainofalpha,tang2025alphaagentllmdrivenalphamining,ren_linear_2025,rdagentquant}, enabling the generation of vast numbers of candidate alphas.

A summary of representative alpha mining models is shown in Table~\ref{tab:summary}. While many of these methods demonstrate promising results in alpha generation, their evaluation schemes are often limited, inconsistent, and incomplete. In practice, two types of evaluation are commonly used: backtesting and correlation-based metrics such as the Information Coefficient (IC) or RankIC~\cite{wang_2025_quantbench}. Backtesting simulates portfolio performance using historical market data, but it is inherently sequential, computationally expensive, and highly sensitive to strategy design choices. IC-based metrics provide a lightweight proxy for assessing the linear correlation between alphas and future returns, yet they focus solely on predictive ability and fail to capture other essential dimensions of alpha quality—such as temporal stability, robustness to market perturbations, diversity, and logical interpretability. These limitations make it difficult to perform fair and comprehensive comparisons across mining models, especially in contexts where backtesting strategy is different. Furthermore, most alpha mining models remain closed-source, which hinders reproducibility and slows progress in this important area of quantitative research.

\begin{table}[ht]
    \centering
    \caption{Summary of current alpha mining models. Metrics in \textbf{bold} are based on backtesting.}
    \rowcolors{2}{body1}{body2}
    \begin{tabular}{ccc}
        \rowcolor{headerbg}
        \toprule
        \textbf{Method} & \textbf{Metrics} & \textbf{Code} \\
        \midrule
        GP~\cite{chen_2021_gp} & \textbf{AR} & {\color{red}\ding{55}} \\
        AutoAlpha~\cite{zhang_autoalpha_2020} & IC, \textbf{AR}, \textbf{SR} & {\color{red}\ding{55}} \\
        AlphaEvolve~\cite{cui_alphaevolve_2021} & IC, \textbf{SR} & {\color{red}\ding{55}} \\
        AlphaGen~\cite{alphagen} & IC, RankIC, \textbf{AR} & {\color{teal}\ding{51}} \\
        RiskMiner~\cite{ren_riskminer_2024} & IC, ICIR, RankIC & {\color{red}\ding{55}} \\
        QFR~\cite{zhao_quantfactor_2024} & IC, RankIC & {\color{red}\ding{55}} \\
        $\text{Alpha}^2$~\cite{xu_textalpha2_2024} & IC & {\color{red}\ding{55}} \\
        AlphaForge~\cite{shi_alphaforge_2025} & IC & {\color{teal}\ding{51}} \\
        AlphaQCM~\cite{zhu_alphaqcm} & IC & {\color{teal}\ding{51}} \\
        FAMA~\cite{li-2024-fama} & RankIC, RankICIR & {\color{red}\ding{55}} \\
        AlphaAgent~\cite{tang2025alphaagentllmdrivenalphamining} & IC, ICIR, \textbf{AR}, IR, \textbf{MaxDD} & {\color{teal}\ding{51}} \\
        \bottomrule
    \end{tabular}
    \label{tab:summary}
\end{table}

To address this gap, we propose \textbf{AlphaEval}, a structured and efficient evaluation framework for automated alpha mining models. Unlike traditional approaches, AlphaEval evaluates an alpha mining model holistically based on the collection of alphas it produces—without requiring portfolio backtesting. Our framework scores models along five complementary dimensions: \textit{predictive power}, \textit{temporal stability}, \textit{robustness to market perturbations}, \textit{financial logic}, and \textit{diversity}. These metrics are designed to be parallelizable, interpretable, and applicable across different models and markets.

We apply AlphaEval to a suite of representative alpha mining models. The results show that AlphaEval scores are highly consistent with precision backtesting outcomes while offering broader diagnostic insight and significantly faster evaluation. Furthermore, AlphaEval demonstrates superior alpha selection performance compared to conventional single-metric filtering approaches (e.g., by IC alone).

In summary, our contributions are as follows:
\begin{itemize}
    \item We propose \textbf{AlphaEval}, the first unified, backtest-free, and parallelizable framework for evaluating automated alpha mining models.
    \item We design five complementary metrics that comprehensively assess the predictive quality, temporal stability, robustness, interpretability, and diversity of generated alphas.
    \item We conduct large-scale benchmarking across eight popular mining models, showing AlphaEval’s effectiveness and consistency with traditional backtesting.
    \item We open-source all implementations and evaluation tools to foster transparency and reproducibility in the quantitative finance community.
\end{itemize}
\section{Related Work}
\label{sec:related}

\subsection{Alpha Mining}

Alpha mining typically consists of two sequential stages: first, generating candidate alpha set, and second, selecting and combining these alphas into a predictive signal.

In the alpha generation stage, early research primarily focused on manually crafting alphas based on economic insights or empirical patterns, exemplified by classical models such as Fama-French~\cite{fama_2015_five} and curated alpha libraries like Alpha101~\cite{kakushadze_2016_alpha101} and Alpha158~\cite{yang_2020_qlib}. Although these handcrafted alphas are intuitive and interpretable, their diversity and expressiveness are inherently limited. To address these limitations, researchers introduced automated approaches such as genetic algorithm (GA)~\cite{chen_2021_gp,zhang_autoalpha_2020, cui_alphaevolve_2021}, reinforcement learning (RL)~\cite{alphagen, zhu_alphaqcm}, generative adversarial networks (GANs)~\cite{shi_alphaforge_2025} and more recently, large language models (LLMs)~\cite{li-2024-fama,tang2025alphaagentllmdrivenalphamining}. Evolutionary algorithms and RL-based methods systematically explore large symbolic search spaces, generating a substantial number of potential alphas; however, these automated methods frequently produce factors lacking clear financial interpretability and often face challenges in generalizing across different market conditions. LLM-based approaches leverage financial linguistic understanding to generate more interpretable and semantically meaningful alpha expressions, but often lack comprehensive validation of their practicality due to simplified evaluation criteria.

In the alpha selection and combination stage, generated alpha candidates are assessed and integrated into predictive strategies. A common workflow involves applying metric-based thresholding, such as Information Coefficient (IC) or Sharpe ratio, to initially filter promising alphas, followed by modeling their relationships through techniques including linear regression, LightGBM~\cite{ke_lightgbm_2017}, or XGBoost~\cite{chen_xgboost_2016}. An alternative strategy employed by RL-based methods (e.g., AlphaGen, AlphaQCM) directly incorporates the selection and combination tasks into the alpha discovery process itself. These methods optimize portfolio-level performance metrics as rewards during alpha search, thereby creating an end-to-end optimization framework. Although this integrated optimization strategy shows promise in enhancing alpha quality, it increases model complexity and computational cost, requiring stronger supervision signals and longer training cycles.

\subsection{Alpha Evaluation Metrics}

The dominant evaluation metrics in alpha mining focus on assessing predictive power. Information Coefficient (IC), RankIC, and various return-based metrics (e.g., annual return, Sharpe ratio) are widely used to quantify the association between alpha signals and future returns. While effective in measuring short-term predictability, these metrics offer a narrow view of alpha quality. They fail to account for critical properties such as stability over time, robustness to market noise, structural diversity among alphas, and logical consistency. Moreover, backtesting—a common but expensive evaluation method—introduces sensitivity to strategy configurations, suffers from low parallelism, and limits its scalability in large-scale alpha generation tasks.

In addition, current evaluation protocols typically treat each alpha independently, lacking a mechanism to assess the collective performance or quality of the alpha set generated by a mining model. This hinders fair comparison between models and undermines efforts to understand their generalization capabilities. A more holistic evaluation framework is urgently needed to support model-level diagnosis and efficient alpha selection.

\subsection{Interpretability and Robustness in Quantitative Models}

In practical financial applications, interpretability and robustness are increasingly viewed as essential for risk management, regulatory compliance, and human-in-the-loop decision making~\cite{rudin_2019_explaining, tatsat_2025_interpretability}. Formulaic alphas—unlike deep learning-based signals—have the advantage of being inherently transparent and interpretable. Yet, few existing benchmarks incorporate metrics that explicitly reward logical clarity or penalize unstable behavior.

Recent work in interpretable machine learning has emphasized model transparency~\cite{rane_2023_explainable,ravi_2025_explainable} and behavioral consistency~\cite{chen_2023_can,soton_2025_enhancing}, but their application in alpha mining remains limited. Likewise, robustness to noise and temporal perturbations is rarely examined in evaluating alpha quality, despite being critical for real-world deployment.

Our work fills these gaps by proposing \textbf{AlphaEval}, a unified evaluation framework that integrates interpretability, stability, and robustness into the assessment of alpha mining algorithms. In doing so, we shift the focus from narrow, label-dependent metrics to a more comprehensive, model-level evaluation approach.
\section{Definition \& Preliminary}
\label{sec:def}

Let $X \in \mathbb{R}^{T \times N \times F}$ denote a panel of financial features, where $T$ is the number of time steps, $N$ is the number of assets (e.g., stocks), and $F$ is the number of features per asset. Correspondingly, let $y \in \mathbb{R}^{T \times N}$ denote the future returns:
\begin{equation}
\label{eq:def_y}
y_{t,n}^{\Delta T}=\frac{\text{close}_{t+\Delta T,n}-\text{close}_{t,n}}{\text{close}_{t,n}},
\end{equation}
where $\text{close}_{t,n}$ denotes the closing price of stock $n$ at time $t$, $\Delta T$ denotes the prediction interval\footnote{If not specified then $\Delta T = 1$.}.

The goal of alpha mining is to construct a set of alphas $\mathcal{A} = \{\alpha_i\}_{i=1}^{K}$, where each $\alpha_i$ is a symbolic or parametric function that produces a score matrix $S^{(i)} \in \mathbb{R}^{T \times N}$:
\begin{equation}
\label{eq:def_alpha}
S^{(i)}_{t,n} = \alpha_i\left( X_{t-L^{(i)}+1:t,n,:} \right),
\end{equation}
where $X_{t-L^{(i)}+1:t,n,:} \in \mathbb{R}^{L^{(i)} \times F}$ denotes the sequence of past $L^{(i)}$ feature vectors of asset $n$ up to time $t$. Each alpha thus operates on a temporal slice of the data and outputs a scalar score per asset and time.

\paragraph{Stage I: Alpha Generation.}
The first stage aims to discover candidate alphas from the data via automated procedures. These include symbolic regression (e.g., genetic programming), reinforcement learning, or language models. The outcome is a candidate pool:
\begin{equation}
\mathcal{A}_{\text{gen}} = \{\alpha_1, \alpha_2, \ldots, \alpha_K\}, \quad \alpha_i: \mathbb{R}^F \rightarrow \mathbb{R}.
\end{equation}
Each $\alpha_i$ maps asset-level features to scalar scores over the entire panel $X$, resulting in a matrix $S^{(i)} \in \mathbb{R}^{T \times N}$.

\paragraph{Stage II: Alpha Selection and Combination.}
Given the candidate pool $\mathcal{A}_{\text{gen}}$, the second stage selects a subset $\mathcal{A}_{\text{sel}} \subseteq \mathcal{A}_{\text{gen}}$ and combines them into a final signal matrix $S \in \mathbb{R}^{T \times N}$:
\begin{equation}
S_{t,n} = \mathcal{F} \left( \{\alpha_i(X_{t-L^{(i)}+1:t,n,:})\}_{\alpha_i \in \mathcal{A}_{\text{sel}}} \right),
\end{equation}
where $\mathcal{F}$ denotes a combination function, such as a weighted linear combination or a nonlinear model like LightGBM~\cite{ke_lightgbm_2017} or XGBoost~\cite{chen_xgboost_2016}.
\section{AlphaEval}
\label{sec:alphaeval}

To enable efficient and comprehensive evaluation of alpha mining models, we introduce \textbf{AlphaEval}, a multi-dimensional assessment framework that quantifies the quality of both the generated alpha signals and the underlying mining algorithms. 

Unlike traditional evaluation paradigms focused solely on predictive metrics such as IC or backtest returns, AlphaEval offers a unified benchmark across five complementary dimensions: \textit{predictive power}, \textit{temporal stability}, \textit{robustness to market perturbations}, \textit{financial logic}, and \textit{diversity}. Among them, the first four dimensions are evaluated for the alpha quality and the last one dimension is evaluated for the mining ability of the model. An overview of AlphaEval is shown in Figure \ref{fig:main}.
\begin{figure*}[ht]
    \centering
    \includegraphics[width=1.0\linewidth]{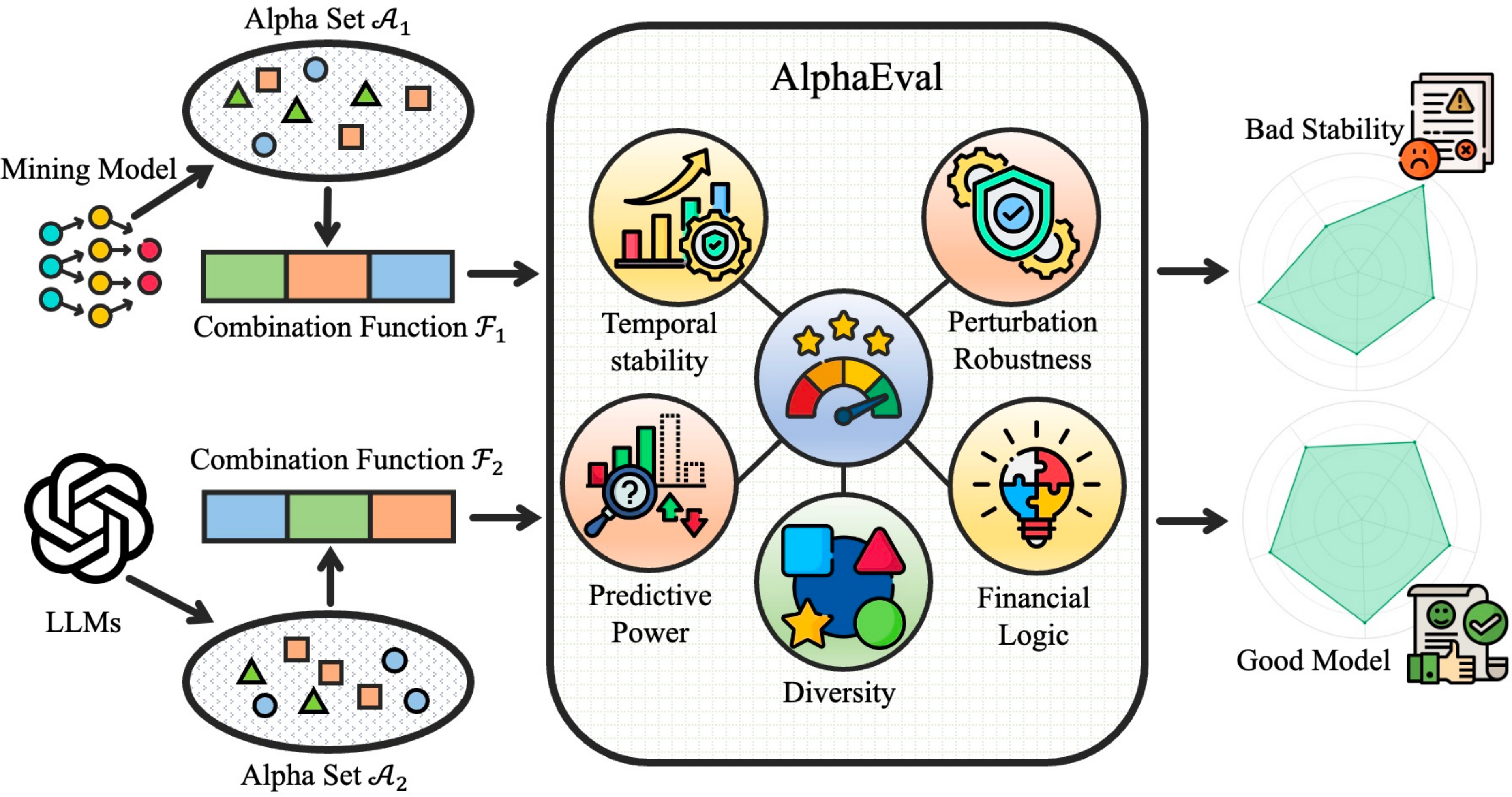}
    \caption{Overview of AlphaEval. After getting the alpha calculation results from the alpha mining model, the combined alphas are obtained by the alpha combination model and evaluated by AlphaEval, which can get the scores of different dimensions for a comprehensive evaluation of the model.}
    \label{fig:main}
\end{figure*}
In the following, we detail the motivation and implementation of each evaluation dimension.

\subsection{Predictive Power}
\label{sec:predpower}

The most fundamental property of an alpha is its ability to predict future returns. In AlphaEval, we retain this classical perspective and include predictive power as one core dimension.

We adopt two widely used correlation-based metrics:

\begin{itemize}
    \item \textbf{Information Coefficient (IC)}: Defined as the average Pearson correlation between the alpha scores $S_{t,:}$ and the realized returns $y_{t,:}$ across assets over all time steps:
    \begin{gather}
    \label{eq:ic}
        \mathrm{IC} = \frac{1}{T} \sum_{t=1}^{T}\mathrm{IC}_t,\\
        \mathrm{IC}_t=\frac{\sum_{i=1}^N\left(S_{t,i}-\bar{S_{t,:}}\right)\left(y_{t,i}-\bar{y_{t,:}}\right)}{\sqrt{\sum_{i=1}^N\left(S_{t,i}-\bar{S_{t,:}}\right)^2}\sqrt{\sum_{i=1}^N\left(y_{t,i}-\bar{y_{t,:}}\right)^2}}.
    \end{gather}

    \item \textbf{Rank Information Coefficient (RankIC)}: Defined as the average Spearman rank correlation between $S_{t,:}$ and $y_{t,:}$ across time:
    \begin{gather}
    \label{eq:rankic}
        \mathrm{RankIC} = \frac{1}{T}\mathrm{RankIC}_t,\\ \mathrm{RankIC}_t=\sum_{t=1}^{T} \left(1-\frac{6\sum_{i=1}^Nd_i^2}{N\left(N^2-1\right)}\right),\\
        d_i\equiv R\left[S_{t,i}\right]-R\left[y_{t,i}\right],
    \end{gather}
    where $R\left[\cdot_{t,i}\right]$ denotes the rank of $\cdot_{t,i}$ in $\cdot_{t,:}$.
\end{itemize}

Based on these metrics, we propose the \textbf{Predictive Power Score (PPS)}, defined as follows:
\begin{equation}
    \mathrm{PPS}=\beta\cdot\mathrm{IC}+(1-\beta)\cdot\mathrm{RankIC},
\end{equation}
where $\beta$ is a hyperparameter that controls the IC and RankIC occupancy. The metric summarize the predictive strength of an alpha across time. A higher PPS indicates stronger alignment between the alpha scores and subsequent asset returns, which is crucial for investment decision-making.

\subsection{Temporal Stability}
\label{sec:temporal_stability}

While high predictive accuracy is desirable, unstable alphas are difficult to deploy in real trading environments. Temporal stability measures how consistent an alpha's ranking of assets remains across consecutive time steps, reflecting its reliability over time.

To quantify this, we propose the \textbf{Relative Rank Entropy (RRE)}:

\begin{equation}
    \mathrm{RRE} = \frac{1}{T-1} \sum_{t=2}^{T} \frac{1}{1+\mathrm{KL}\left( S_t \| S_{t-1} \right)},
\end{equation}

where $\mathrm{KL}(p \| q)$ is a divergence-based entropy between two rank vectors at time $t$ and $t-1$. In practice, we compute the KL dispersion by converting the ranked arrangement into a discrete distribution:
\begin{gather}
    p\left(S_{t,i}\right)=\frac{R\left[S_{t,i}\right]}{\sum_{j=1}^NR\left[S_{t,j}\right]},\\
    \mathrm{KL}\left(S_t\|S_{t-1}\right)=\sum_{i=1}^Np\left(S_{t,i}\right)\log\frac{p\left(S_{t,i}\right)}{p\left(S_{t-1,i}\right)}.
\end{gather}

A higher RRE indicates greater temporal consistency in asset ranking, which is favorable for stable portfolio construction and lower turnover. This stability is a desirable property in risk-sensitive or strategy-constrained scenarios.

\subsection{Robustness to Market Perturbations}
\label{sec:robustness}

In financial markets, features are often subject to random fluctuations or structural shocks. A robust alpha should remain stable under such perturbations. To this end, we propose \textbf{Perturbation Fidelity Score (PFS)} to evaluate the sensitivity of alpha rankings to input-level noise.

Formally, let $\varepsilon \sim \mathcal{D}$ be a perturbation applied to the original feature tensor $X$, and define the perturbed alpha score as $S' = \alpha(X + \varepsilon)$. The robustness of an alpha is quantified as the correlation between the original and perturbed asset rankings:

\begin{equation}
    \mathrm{PFS}_{\mathcal{D}} = \mathrm{Corr} \left(S, S'\right),
\end{equation}

where $\text{Corr}(\cdot,\cdot)$ denotes the Spearman rank correlation, similar to the RankIC definition above.

We consider two types of perturbation distributions:
\begin{itemize}
    \item \textbf{Gaussian noise} ($\varepsilon \sim \mathcal{N}(0, \sigma^2)$): simulates random fluctuations driven by market sentiment or microstructure noise.
    \item \textbf{$t$-distribution} ($\varepsilon \sim t(\nu)$): mimics structural market shocks such as policy changes or crises, introducing heavy-tailed disturbances.
\end{itemize}
Based on these two different perturbations, the PFS is defined as follows:
\begin{equation}
    \mathrm{PFS} = \min \{\mathrm{PFS}_{\mathcal{N}(0, \sigma^2)},\mathrm{PFS}_{t(\nu)}\}.
\end{equation}
An alpha with high $\mathrm{PFS}$ is considered more robust and reliable in volatile or nonstationary market environments.

\subsection{Financial Logic}
\label{sec:logic}

In addition to statistical properties, the financial interpretability of alpha signals plays a crucial role in practical deployment, especially for risk management and compliance purposes. To assess the financial plausibility of a given alpha, we introduce a \textbf{Logic Score}, rated by a Large Language Model with financial knowledge.

Given the symbolic expression or natural language description of an alpha $\alpha_i$, we prompt an LLM to evaluate its logical coherence, economic intuition, and interpretability.

The LLM's response is parsed to extract a numerical score, which we denote as the Logic Score for $\alpha_i$. We average this score across a set of alphas to summarize the logical quality of a mining algorithm.

While inherently subjective, this mechanism reflects a growing trend of combining human-aligned reasoning with automated alpha discovery. It complements traditional metrics by incorporating domain-informed assessments that cannot be easily captured by statistical measures alone.

\subsection{Diversity}
\label{sec:diversity}

A desirable alpha set should contain diverse signals to avoid redundancy and enhance robustness when combined. We propose \textbf{Diversity Entropy (DE)}, which quantifies the diversity of the selected alpha set by analyzing the covariance structure of the output signals.

Let $\{S^{(i)}\}_{i=1}^m$ be $m$ selected alpha signals, each flattened into a vector over all $(t,n)$ pairs. Let $C \in \mathbb{R}^{m \times m}$ be the covariance matrix computed over these $m$ alpha vectors. We denote the eigenvalues of $C$ as $\lambda_1, \lambda_2, \dots, \lambda_m$.

To measure the distributional spread of variance across alpha signals, we normalize the eigenvalues into a probability distribution:

\begin{equation}
    p_i = \frac{\lambda_i}{\sum_{j=1}^m \lambda_j}.
\end{equation}

The DH is then defined as the entropy of this distribution:

\begin{equation}
    \mathrm{DH} = \frac{-\sum_{i=1}^m p_i \log p_i}{\log m}.
\end{equation}

Higher entropy indicates a more diverse alpha set that captures complementary information from multiple dimensions.
\begin{table*}[!h]
    \centering
    \caption{Performance of alpha mining models on A-Share under the AlphaEval framework. \textbf{Bold} is the highest, \underline{underlined} is the second, random is only used as a reference value and is not involved in the comparison.}
    \label{tab:main-results}
    \begin{tabular}{ccccccc}
        \toprule
        \multicolumn{2}{c}{Method} & Predictive$\uparrow$ & Stability$\uparrow$ & Robustness$\uparrow$ & Diversity$\uparrow$ & Logic$\uparrow$ \\
        \midrule
        &Random & 0.009 & 0.844 & 0.846 & 0.981 & 60.0 \\
        \midrule
        \multirow{3}{*}{GA-Based}&GP & 0.017 & 0.724 & 0.983 & 0.693 & 63.5 \\
        &AutoAlpha  & 0.027 & 0.774 & 0.971 & \textbf{0.946} & 64.0 \\
        &AlphaEvlove & 0.028 & 0.975 & 0.688 & \underline{0.897} & 63.0 \\
        \midrule
        \multirow{2}{*}{RL-Based} & AlphaGen & 0.034 & \textbf{0.978} & \textbf{0.997} & 0.650 & 59.0 \\
        &AlphaQCM & 0.029 & 0.975 & \underline{0.996} & 0.477 & 62.0 \\
        \midrule
        GANs-Based & AlphaForge & \underline{0.040} & \underline{0.977} &  0.677 & 0.743 & 62.5 \\
        \midrule
        \multirow{2}{*}{LLMs-Based}&FAMA & 0.031 & 0.868 & 0.992 & 0.831 & \underline{69.0} \\
        &AlphaAgent & \textbf{0.041} & 0.779 & 0.415 & 0.812 & \textbf{70.0} \\   
        \bottomrule
    \end{tabular}
\end{table*}

\subsection{Overall AlphaEval Score}
\label{sec:alphaeval_overall}

To aggregate the five dimensions of AlphaEval---Predictive Power (PPS), Temporal Stability (RRE), Robustness to Perturbations (PFS), Financial Logic (Logic), and Diversity (DE)---we compute a normalized, composite score for each candidate alpha $j$. For each metric $m \in \{\text{PPS},\text{RRE},\text{PFS},\text{Logic},\text{DE}\}$, we standardize scores within each dataset and evaluation round: 
$\tilde m_{j} \;=\; \frac{m_{j}-\mu_{m}}{\sigma_{m}}, $where $\mu_m$ and $\sigma_m$ are the mean and standard deviation of metric $m$ across all alphas under comparison. All five metrics are oriented so that larger values indicate better quality. The overall AlphaEval score is the convex combination 
    $\mathrm{AlphaEval}(j) \;=\; \frac{1}{m}\sum_{m} w_m\,\tilde m_{j}.$

\section{Experiments \& Results}
\label{sec:experiments}

In this section, we conduct comprehensive experiments to demonstrate the effectiveness and practicality of the AlphaEval framework. We aim to answer the following questions:

\begin{itemize}
    \item \textbf{Q1:} How do mainstream models perform under the evaluation of AlphaEval?
    \item \textbf{Q2:} Do the proposed evaluation dimensions offer complementary information beyond traditional metrics?
    \item \textbf{Q3:} Is the evaluation framework robust when hyperparameters are adjusted?
    \item \textbf{Q4:} Are the evaluation scores aligned with real-world investment behaviors such as turnover and drawdown?
    \item \textbf{Q5:} Does AlphaEval significantly speed up the evaluation process compared to backtesting-based evaluation systems?
\end{itemize}

\subsection{Experimental Setting}
\label{sec:setting}

All evaluations are conducted using our implementation of AlphaEval on the public Qlib platform. We use both A-share and U.S. stock datasets provided by Qlib as our evaluation benchmarks\footnote{Detailed information regarding the dataset split is provided in the Appendix~\ref{appendix:dataset}.}. For the Predictive Power Score (PPS), we set the weighting parameter $\beta$ to 0.5 to balance between predictive accuracy and stability over time. For the Perturbation Fidelity Score (PFS), we apply two types of noise: Gaussian and Student’s t-distributed. The standard deviation of Gaussian noise is set to the average daily volatility of the corresponding market index. For the t-distribution, the degrees of freedom are fixed at 3, and the distribution is rescaled to match the same standard deviation as the Gaussian case. This ensures a controlled comparison of robustness to market sentiment perturbations and policy changes perturbations.

\subsection{Main Results}
\label{sec:mainresults}

To answer \textbf{Q1}, We group models by methodology—genetic algorithm (GA-Based), reinforcement learning (RL-Based), generative adversarial networks (GANs-Based), and LLMs (LLMs-Based)—and compare them across all dimensions. Table~\ref{tab:main-results} presents the performance of all baseline models under the proposed AlphaEval framework on A-share (China market) and the results on U.S. stock dataset are provided in Appendix~\ref{appendix:er}.

GA-based methods demonstrate strong robustness and stability, with GP achieving the highest robustness (0.983) and AutoAlpha excelling in diversity (0.946), though overall interpretability remains limited. RL-based methods, particularly AlphaGen, show outstanding stability (0.978) and robustness (0.997), alongside competitive predictive power, but suffer from low logic scores, indicating poor transparency. GANs-based methods offer high predictive performance (0.040 for AlphaForge) and solid stability, but their robustness and logic consistency are less reliable. In contrast, LLMs-based methods—especially AlphaAgent—achieve the best overall trade-off: highest predictive power (0.041), best logic clarity (70.0), and strong diversity, but with slightly lower robustness. These results suggest that while RL and GA methods offer behavioral reliability and search diversity, LLMs stand out by combining high predictive accuracy with semantic interpretability, making them particularly well-suited for human-in-the-loop financial applications.

\begin{figure*}[t]
    \centering
    \includegraphics[width=0.95\linewidth]{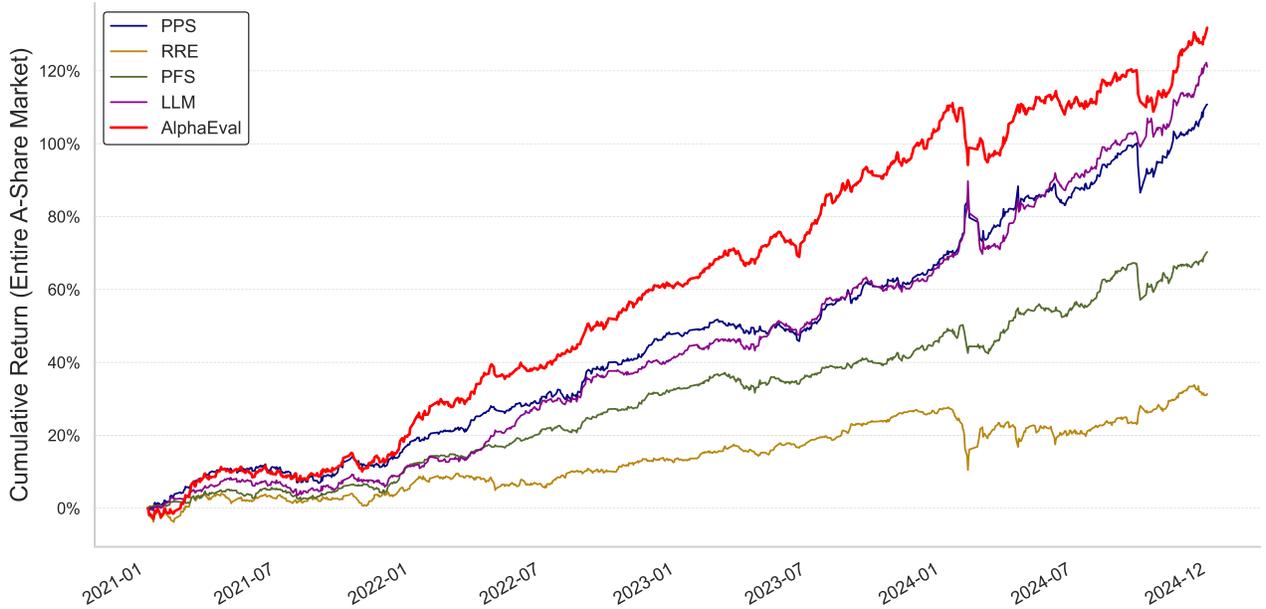}
    \caption{Cumulative returns of portfolios constructed by selecting top-ranked alpha alphas based on different evaluation metrics: PPS, RRE, PFS, LLM Logic Score, and the integrated AlphaEval score.}

    \label{fig:ablation}
\end{figure*}

\subsection{Ablation Study}
\label{sec:ablation}

To answer \textbf{Q2}, we conduct an ablation study based on metric-specific alpha selection. From the full set of candidate alphas—generated during the model search process, including those not selected for final output—we construct portfolios using top-ranked alphas according to individual metrics: \textbf{Predictive Power Score (PPS)}, \textbf{Relative Rank Entropy (RRE)}, \textbf{Perturbation Fidelity Score (PFS)}, and \textbf{LLM Logic Score}. These are compared against the integrated selection based on the full \textbf{AlphaEval} score.

Figure~\ref{fig:ablation} shows the cumulative returns on the A-share market from 2021 to 2024. We find that each individual metric contributes positively to portfolio performance, reflecting its unique perspective on alpha quality. PPS and LLM Logic yield relatively strong returns, highlighting their effectiveness in capturing predictive and semantic strength, respectively. However, when used alone, they occasionally suffer from instability or diminished robustness. In contrast, RRE and PFS provide more conservative but stable profiles, emphasizing structural stability and noise resistance.

Importantly, the full AlphaEval score, which integrates all metrics, consistently outperforms any single-metric selection. This confirms that the proposed dimensions capture \textit{complementary} aspects of alpha quality—such as predictability, robustness, diversity, and interpretability—and their combination leads to a more reliable and effective evaluation signal than traditional metrics alone.

\subsection{Sensitivity Analysis}
\begin{figure*}[!t]
    \centering
    \includegraphics[width=1.\linewidth]{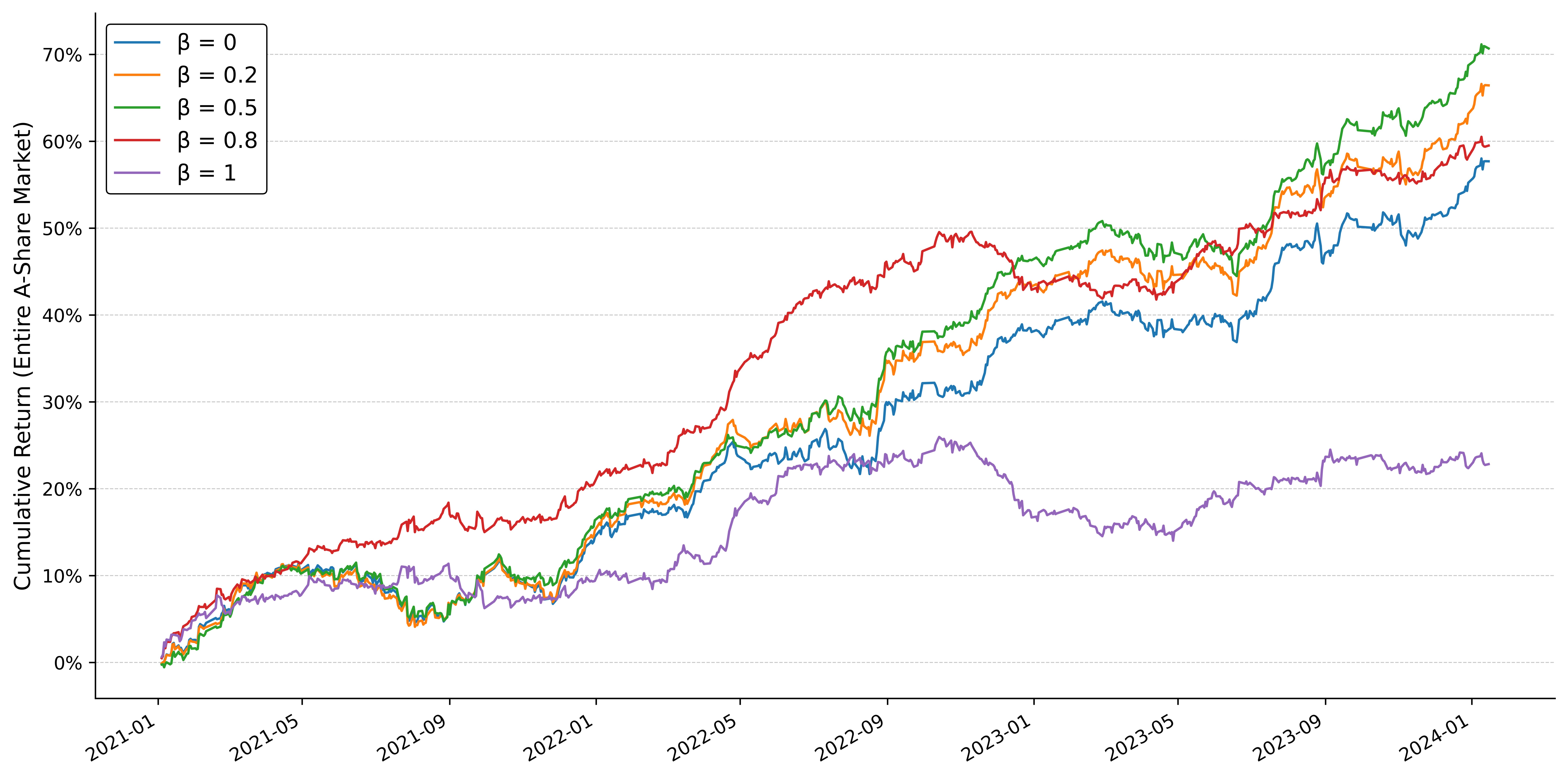}
    \caption{Cumulative return trajectories of portfolios constructed under different values of $\beta$ in the PPS formulation. The parameter $\beta$ balances predictive power against factor quality dimensions. Moderate values ($\beta = 0.5, 0.8$) yield superior performance compared to extreme values ($\beta = 0, 1$), suggesting that an appropriate trade-off between predictive accuracy and robustness leads to more stable and effective alpha selection.}
    \label{fig:sensitive_beta}
\end{figure*}

\begin{figure}[!h]
    \centering
    \includegraphics[width=1.\linewidth]{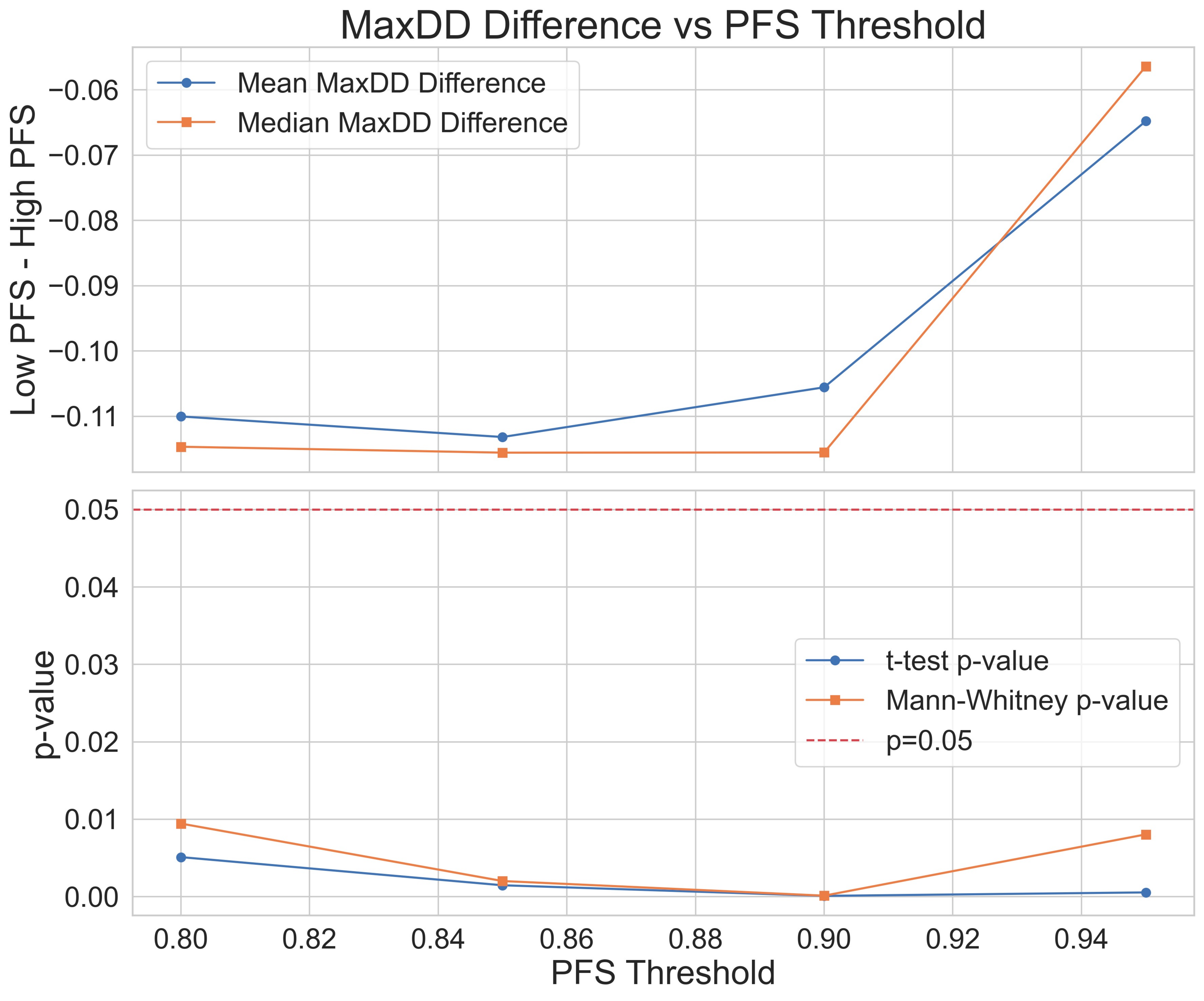}
    \caption{Sensitivity analysis of the PFS threshold on downside risk. The upper panel shows the mean and median differences in maximum drawdown (MaxDD) between low-PFS and high-PFS factor groups across different PFS thresholds. The lower panel reports the corresponding $p$-values from t-tests and Mann-Whitney U tests. A clear reduction in MaxDD is observed for higher-PFS groups, with statistical significance achieved when the PFS threshold ranges from 0.8 to 0.9. These results validate the effectiveness of PFS in identifying more stable factors under noise and support its utility in risk-aware factor selection.}
    \label{fig:sensitive_pfs}
\end{figure}

To answer \textbf{Q3}, sensitivity analyses were conducted on two key parameters: the weighting coefficient $\beta$ in the Predictive Power Score (PPS), and the threshold for the Perturbation Fidelity Score (PFS).

As shown in Figure~\ref{fig:sensitive_beta}, varying $\beta$ in the PPS formulation yields non-monotonic effects on cumulative portfolio returns. While moderate values (e.g., $\beta=0.5$ and $\beta=0.8$) lead to stronger performance, extreme values ($\beta=0$ or $\beta=1$) result in diminished returns. This pattern suggests that incorporating multiple factor quality dimensions—rather than relying solely on predictive power—can lead to more robust alpha selection and better generalization.

Figure~\ref{fig:sensitive_pfs} further examines the impact of the PFS threshold on downside risk. When comparing low-PFS and high-PFS factor groups, factors with higher PFS consistently exhibit lower maximum drawdown. The difference becomes statistically significant (p-value < 0.05) when the threshold lies between 0.8 and 0.9, confirming that PFS effectively captures stability under noise perturbation. Together, these findings demonstrate that the proposed metrics not only guide factor selection but also align with real-world risk control considerations.

\subsection{Justification of Evaluation Dimensions}
\label{sec:justification}

\begin{figure}[!h]
    \centering
    \includegraphics[width=1.0\linewidth]{fig/justify.pdf}
    \caption{\textbf{(a)}~Relative Rank Entropy (RRE) exhibits a strong negative correlation with annualized turnover (AnnTurn), suggesting that more temporally stable signals result in lower trading activity. \textbf{(b)}~Alphas with higher Perturbation Fidelity Score (PFS $\geq 0.9$) show significantly lower maximum drawdown (MaxDD), indicating greater robustness to market perturbations. \textbf{(c)}~The LLM-based logic score aligns closely with human expert rankings, as evidenced by consistently high NDCG@k values across different cutoff thresholds. \textbf{(d)}~AlphaEval significantly reduces evaluation time compared to traditional backtesting, highlighting its efficiency and scalability.}
    \label{fig:justify}
\end{figure}   

To answer \textbf{Q4}, we analyzed and verified the rationality of the newly proposed metrics:

\begin{itemize}
    \item \textbf{Temporal Stability vs. Turnover}: To assess the interpretability of Relative Rank Entropy (RRE), we examined its relationship with the annualized turnover rate (AnnTurn)\footnote{For definition of turnover, see the Appendix~\ref{appendix:basic}.}. As shown in Figure~\ref{fig:justify}(a), RRE exhibits a strong and statistically significant negative linear correlation with turnover. The fitted regression line indicates that as RRE increases—i.e., as alpha scores become more stable and structured—the resulting strategy becomes significantly less reactive, reflected in lower turnover rates. This suggests that RRE not only serves as a theoretical measure of rank consistency, but also reflects practical trading behavior.

    \item \textbf{Robustness vs. MaxDrawdown}: To evaluate the practical utility of PFS, we partitioned the alpha pool based on a threshold of $\mathrm{PFS} \ge 0.9$. As illustrated in Figure~\ref{fig:justify}(b), the high-PFS group exhibited substantially lower Max Drawdown (MaxDD), with a tighter and more favorable distribution. The definition of Max Drawdown is provided in Appendix~\ref{appendix:basic}. Statistical testing confirmed the difference to be highly significant. These results demonstrate that PFS is not only a predictive score but also an effective selection criterion for identifying robust, low-risk strategies.
    
    \item \textbf{Logic Score vs. Human Expert Judgment}: We compute $\text{NDCG@}k$ across multiple cutoff values ($k \in {5, 10, 20, 50, 100}$) to assess the model’s alignment with human rankings at different granularities~\cite{jarvelin_2002_dcg,wang_2013_ndcg}. As shown in Figure~\ref{fig:justify}(c), the model achieves consistently high NDCG scores, indicating strong agreement with human judgment in both top-ranked and overall alpha evaluation. These results further support the validity of the logic consistency dimension used in AlphaEval.
    
    \item \textbf{Diversity via Covariance Entropy}: The proposed Diversity Entropy (DH) provides a principled way to quantify the spread of variance across alpha signals by analyzing the spectrum of their covariance matrix. Intuitively, when multiple signals are highly correlated or collinear, their variance is concentrated along a few principal components, resulting in a low-entropy eigenvalue distribution. In contrast, when signals are orthogonal or capture complementary information, variance is more evenly distributed, yielding higher entropy. Therefore, DH serves as an effective proxy for the intrinsic dimensionality of the signal set, and can be used to detect and penalize multicollinearity. \footnote{Detailed proof is provided in the Appendix~\ref{appendix:dh_proof}.}
\end{itemize}

These results demonstrate that the AlphaEval metrics are not only computationally efficient but also aligned with real-world financial behaviors and theoretical intuition.

\subsection{Evaluation Efficiency}
To answer \textbf{Q5}, we compared the evaluation efficiency of AlphaEval with that of a traditional backtesting-based system. Unlike our metrics, portfolio backtesting exhibits path dependency. Positions, cash balances, and transaction costs evolve through state recursion. Dividing the time dimension into mutually exclusive weekly (or monthly) time slices and backtesting them independently causes cross-slice transfers (of positions, cash, and costs) to fail, leading to state leakage and non-additive gains/losses. Backtesting can be parallelized at the asset or parameter level, but sequential simulation of a single strategy along the timeline is essential to ensure accounting integrity.

For the parts of the computation that can be parallelized, we use 20 processes to parallelize the computation. As shown in Figure~\ref{fig:justify}(d), AlphaEval achieves a significant speedup, reducing relative evaluation time by more than 25\%. This improvement stems primarily from its backtesting-free design: all evaluation metrics in AlphaEval are formulated as functions that can be computed independently in parallel, in contrast to the inherently sequential nature of portfolio backtesting. This enables scalable and fast evaluations and accelerates the alpha mining process.
\section{Conclusion}
\label{sec:conclusion}

This paper introduces \textbf{AlphaEval}, a unified, backtesting-free, and parallelizable framework for evaluating automated alpha-mining models. To address the limitations of conventional practice—most notably an over-reliance on backtests and the use of incomplete single-metric summaries—we propose five complementary dimensions that together provide a holistic view of alpha quality: predictive power, temporal stability, robustness to perturbations, financial logic, and diversity.

Extensive experiments spanning genetic programming, reinforcement learning, generative models, and large language models demonstrate the effectiveness of AlphaEval. The framework yields scores that align with backtesting outcomes while providing additional interpretability and diagnostic insight. Ablations confirm that the five dimensions are complementary, and empirical analyses show expected links with real-world behaviors such as turnover and drawdown. Moreover, AlphaEval substantially improves evaluation efficiency, enabling scalable alpha screening within large mining pipelines.

By decoupling evaluation from backtesting and releasing open-source tooling, AlphaEval promotes greater transparency, comparability, and reproducibility in alpha research. Looking forward, an important direction is to use AlphaEval not only as a post-hoc evaluator but also as a training signal during alpha generation—e.g., as reinforcement-learning rewards, differentiable surrogates, or prompt-conditioning targets. This would enable self-improving agents that optimize not only predictive performance but also stability, interpretability, and robustness. Extending AlphaEval to multi-frequency, multi-asset, and cross-market settings is another key avenue for future work.

\section{Limitations}
\label{sec:limit}

While AlphaEval is practical and methodologically transparent, it has several limitations. First, the framework evaluates \emph{alphas} rather than complete trading strategies; it does not replace simulation of position sizing, risk control, execution, or path-dependent accounting. Second, the \emph{Logic} dimension currently relies on LLM-based scoring and can be sensitive to model choices and prompt design; although we introduce calibration and cross-judge checks, residual evaluator bias may remain. Third, although the framework is, in principle, asset-class agnostic, our empirical study focuses on equities.

To broaden applicability, we are extending AlphaEval to futures and multi-asset settings. For equity index futures, we will use standard continuous-contract construction (e.g., volume/open-interest rolls), margin conventions, and cost models, and we will release instrument lists and rolling rules to ensure reproducibility and comparability across asset classes.

\section*{Acknowledgments}
This paper is partially supported by grants from the National Key Research and Development Program of China with Grant No. 2023YFC-3341203 and the National Natural Science Foundation of
China (NSFC Grant Number 62276002).
\clearpage

\bibliographystyle{ACM-Reference-Format}
\balance
\bibliography{cite}

\appendix
\section{Code}
Our code is available at \url{https://github.com/LeoDingggg/AlphaEval}.

\section{Basic Metrics}
\label{appendix:basic}
The features and operators used in the alpha mining process and their meanings are in Table~\ref{tab:operator}.
\begin{table*}[h]
    \centering
        \caption{Features and Operators and their corresponding meanings(partial list).}
    \vspace{-0.45cm}
    \begin{tabular}{cc}
    \toprule
     Names & Meaning  \\
     \midrule
    close        & Closing price of the trading day \\
    high         & Highest price during the trading day \\
    low          & Lowest price during the trading day \\
    open         & Opening price of the trading day \\
    volume       & Number of shares traded \\
     \midrule
      Add(x, y) & Element-wise addition of x and y \\
      Sub(x, y) & Element-wise subtraction of y from x \\
      Mul(x, y) & Element-wise multiplication of x and y \\
      Div(x, y) & Element-wise division of x by y \\
      Mean(x, d) & Time-series mean of x over the past d days \\
      Std(x, d) & Moving standard deviation of x over the past d days \\
      Corr(x, y, d) & Time-series Pearson correlation between x and y over the past d days \\
      Cov(x, y, d) & Time-series covariance between x and y over the past d days \\
    \bottomrule
    \end{tabular}
    \label{tab:operator}
\vspace{-0.3cm}
\end{table*}
For metrics used in previous work, the exact definition is given here.
\paragraph{ICIR:} Based on the definition of IC, ICIR can be defined as follows:
\begin{equation}
    \label{eq:icir}
    \mathrm{ICIR}=\frac{\mathrm{IC}}{\sqrt{\frac{1}{T-1}\sum_{i=1}^T(\mathrm{IC}_t-\mathrm{IC})^2}},
\end{equation}
RankICIR is defined similarly.
\paragraph{AR:} Given the final score matrix $S \in \mathbb{R}^{T \times N}$, a daily long-short portfolio is constructed at each time step $t$. Let $w_t \in \mathbb{R}^N$ denote the portfolio weights:
\begin{equation}
w_{t,n} =
\begin{cases}
+\frac{1}{K}, & \text{if } S_{t,n} \text{ is in the top-$K$}, \\
-\frac{1}{K}, & \text{if } S_{t,n} \text{ is in the bottom-$K$}, \\
0, & \text{otherwise},
\end{cases}
\end{equation}

where $K$ is the number of assets selected for both long and short sides. The portfolio return at time $t$ is defined as:

\begin{equation}
r_t^{\Delta T} = w_t^\top y_t^{\Delta T}.
\end{equation}

Therefore, Annualized Return (AR) can be defined as:
\begin{equation}
\mathrm{AR} = \bar{r} \cdot D,
\end{equation}
where $\bar{r} = \frac{1}{T} \sum_{t=1}^{T} r_t$ is the average daily return, and $D$ denotes the number of trading days in a year (typically $D = 252$).

\paragraph{SR:} Sharpe Ratio (SR) measures the performance of an investment compared to a risk-free asset, after adjusting for its risk. The defination is follows:
\begin{equation}
\mathrm{SR} = \frac{\bar{r}}{\sigma_r} \cdot \sqrt{D},
\end{equation}
where $\sigma_r = \sqrt{ \frac{1}{T - 1} \sum_{t=1}^{T} (r_t - \bar{r})^2 }$ is the standard deviation of daily returns. The risk-free rate is assumed to be zero for simplicity.

\paragraph{Turnover Rate:} Turnover Rate measures the trading frequency implied by the signal. A higher turnover indicates more frequent portfolio rebalancing, which may lead to increased transaction costs and lower net returns in real-world deployment.
It can be defined as:
\begin{equation}
\mathrm{Turn} = \frac{1}{T - 1} \sum_{t=2}^{T} | w_t - w_{t-1} |_1,
\end{equation}
where $| \cdot |_1$ denotes the $\ell_1$ norm, measuring the total absolute change in position weights across consecutive time steps. The range of the turnover rate is $[0, 2]$.

\paragraph{MDD:} Maximum Drawdown (MaxDD) quantifies the worst-case loss from a historical peak in the cumulative return curve. It reflects the risk of large interim losses and is widely used as a measure of downside risk in portfolio evaluation. Let the cumulative net asset value (NAV) series be defined as:

\begin{equation}
\mathrm{NAV}t = \prod{i=1}^{t} (1 + r_i).
\end{equation}

Then the maximum drawdown is computed as:

\begin{equation}
\mathrm{MaxDD} = \max_{t \in [1,T]} \left( \max_{s \in [1,t]} \frac{\mathrm{NAV}_s - \mathrm{NAV}_t}{\mathrm{NAV}_s} \right),
\end{equation}
representing the largest observed loss from a peak to a subsequent trough in the NAV curve.

\paragraph{NDCG@k.} Let $\mathrm{rel}_i$ denote the relevance grade of the item at rank $i$ in a predicted ordering. Define
\[
\mathrm{DCG}@k \;=\; \sum_{i=1}^{k} \frac{2^{\,\mathrm{rel}_i}-1}{\log_2(i+1)}, 
\qquad
\mathrm{NDCG}@k \;=\; \frac{\mathrm{DCG}@k}{\mathrm{IDCG}@k},
\]
where $\mathrm{IDCG}@k$ is the maximum possible DCG@k achieved by the ideal (ground-truth) ordering. We report NDCG@k to quantify how closely the predicted ordering (e.g., LLM or human-judged Logic scores) matches the reference ordering.

\section{Datasets}
\label{appendix:dataset}
We get the A-share dataset and S\&P 500 dataset from Qlib. The time ranges for Train/Valid/Test on A-Share dataset are 2010-01-01 -- 2019-12-31/2020-01-01 -- 2020-12-31/2021-01-01 -- 2024-12-31. The time ranges for Train/Valid/Test on S\&P 500 dataset are 2010-01-01 -- 2015-12-31/2016-01-01 -- 2016-12-31/2017-01-01 -- 2020-12-31.

\section{Feature \& Operator}
See Tabel~\ref{tab:operator}.

\section{Significance Analysis}

In this section, we analyze the significance of the linear fitting experiments of RRE to the annualized turnover rate in the main text, as well as the experiments comparing the threshold of PFS to MaxDD.

Table~\ref{tab:rre-turnover} shows the results of an OLS regression predicting annualized turnover using RRE as the sole independent variable. The coefficient for RRE is significantly negative ($\beta$ = -4.361, $p <$ 0.001), indicating that higher RRE scores are strongly associated with lower turnover rates. The model explains 81.5\% of the variance in turnover ($R^2$ = 0.815), suggesting that RRE can serve as a meaningful proxy for trading intensity.

\begin{table}
\centering
\caption{OLS Regression Results: RRE as Predictor of Annualized Turnover}
\label{tab:rre-turnover}
\vspace{-0.45cm}
\begin{tabular}{lcc}
\toprule
\textbf{Variable} & \textbf{Coefficient} & \textbf{Std. Error} \\
\midrule
Intercept         & 4.494***             & 0.104 \\
RRE               & -4.361***            & 0.113 \\
\midrule
\textbf{Model Statistics} & & \\
$R^2$             & 0.815 & \\
\bottomrule
\multicolumn{3}{l}{\footnotesize{*** p $<$ 0.001}}
\end{tabular}
\vspace{-0.3cm}
\end{table}
\begin{table}
\centering
\caption{Statistical Comparison of MaxDD between High and Low PFS Groups (Threshold = 0.9)}
\vspace{-0.45cm}
\label{tab:pfs-significance}
\begin{tabular}{lccc}
\toprule
\textbf{Test} & \textbf{Statistic} & \textbf{p-value} \\
\midrule
T-test     & $t$ = 4.122         & 0.0001 \\
Mann-Whitney U test            & $U$ = 11,381.5   & 0.0001 \\
\bottomrule
\end{tabular}
\vspace{-0.3cm}
\end{table}

\begin{table*}
    \centering
    \caption{The main results on US Market (S\&P 500) under the AlphaEval framework. \textbf{Bold} is the highest, \underline{underlined} is the second, random is only used as a reference value and is not involved in the comparison.}
    \vspace{-0.45cm}
    \label{tab:main-results-us}
    \begin{tabular}{ccccccc}
        \toprule
        \multicolumn{2}{c}{Method} & Predictive$\uparrow$ & Stability$\uparrow$ & Robustness$\uparrow$ & Diversity$\uparrow$ & Logic$\uparrow$ \\
        \midrule
        &Random & 0.006 & 0.962 & 0.899 & 0.976 & 63.0\\
        \midrule
        \multirow{3}{*}{GA-Based}&GP & 0.016 & 0.743 & \underline{0.978} & \underline{0.924} & 64.0 \\
        &AutoAlpha  & \underline{0.023} & 0.838 & 0.753 & 0.572 & 66.0 \\
        \midrule
        \multirow{2}{*}{RL-Based}&AlphaGen & 0.010 & 0.770 & 0.584 & 0.425 & 58.0 \\
        &AlphaQCM & 0.011 & \textbf{0.995} & \textbf{0.994} & 0.687 & 61.0 \\
        \midrule
        GANs-Based&AlphaForge & 0.019 & \underline{0.839} & 0.949 & \textbf{0.994} & 62.5 \\
        \midrule
        \multirow{2}{*}{LLMs-Based}&FAMA & 0.021 & 0.804 & 0.961 & 0.79 & \underline{70.0}  \\
        &AlphaAgent & \textbf{0.025} & 0.813 & 0.464 & 0.756 & \textbf{71.5} \\   
        \bottomrule
    \end{tabular}
    \vspace{-0.3cm}
\end{table*}

To assess the effectiveness of PFS as a filtering criterion for strategy robustness, we compared Max Drawdown (MaxDD) between groups with PFS $\geq$ 0.9 and PFS $<$ 0.9. As reported in Table~\ref{tab:pfs-significance}, the difference in MaxDD between the two groups was statistically significant. The independent samples t-test yielded $t$(70.59) = 4.12, $p$ = 0.0001, while the non-parametric Mann-Whitney U test also confirmed this difference ($U$ = 11,381.5, $p$ = 0.0001). These results indicate that higher PFS scores are strongly associated with lower drawdowns, validating the practical utility of PFS as a reliable risk-screening metric.

\section{Extra Results}
\label{appendix:er}

We evaluate all models except AlphaEvlove using AlphaEval on the S\&P 500 dataset. The results are in Table~\ref{tab:main-results-us}.

\section{Proof of DH as a Multicollinearity Detector}
\label{appendix:dh_proof}

We formally justify that the proposed \textbf{Diversity Entropy (DH)} is inversely related to the degree of multicollinearity among alpha signals.

\begin{theorem}
Let $\{S^{(i)}\}_{i=1}^m$ be $m$ alpha signals represented as column vectors of equal length, and let $C \in \mathbb{R}^{m \times m}$ be their sample covariance matrix. Let $\lambda_1, \dots, \lambda_m$ be the eigenvalues of $C$, and define the normalized spectral distribution as $p_i = \frac{\lambda_i}{\sum_j \lambda_j}$. The Diversity Entropy is defined as:
\[
\mathrm{DH} = \frac{-\sum_{i=1}^m p_i \log p_i}{\log m}.
\]
Then:
\begin{enumerate}
    \item $\mathrm{DH} \in [0, 1]$.
    \item $\mathrm{DH} = 0$ if and only if the rank of $C$ is 1 (i.e., all signals are perfectly collinear).
    \item Smaller values of $\mathrm{DH}$ imply stronger linear dependence among the signals (i.e., higher multicollinearity).
\end{enumerate}
\end{theorem}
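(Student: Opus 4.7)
The plan is to treat $\{p_i\}$ as a bona fide probability distribution on $m$ outcomes (which is legitimate since $C$ is a sample covariance matrix, hence positive semidefinite with $\lambda_i \ge 0$ and $\sum_j \lambda_j = \mathrm{tr}(C) > 0$ whenever the signals are not constant), and then invoke standard Shannon-entropy inequalities together with the spectral interpretation of covariance. All three claims will follow by combining (i) the bounds $0 \le H(p) \le \log m$, (ii) the equality conditions in those bounds, and (iii) the fact that $\mathrm{rank}(C)$ equals the number of strictly positive $\lambda_i$.

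For Part 1, I would observe that $p_i \ge 0$ and $\sum_i p_i = 1$, so $H(p) = -\sum_i p_i \log p_i$ is a Shannon entropy (with $0 \log 0 := 0$). Standard results give $0 \le H(p) \le \log m$, with the upper bound achieved by the uniform distribution. Dividing by $\log m$ yields $\mathrm{DH} \in [0,1]$ immediately. For Part 2, the equality $H(p) = 0$ holds iff $p$ is a point mass, i.e., exactly one $\lambda_{i^\ast} > 0$ and $\lambda_i = 0$ for $i \neq i^\ast$. This is equivalent to $\mathrm{rank}(C) = 1$. To connect rank-$1$ covariance to perfect collinearity, I would write $C = \frac{1}{n}\tilde{M}^\top \tilde{M}$, where $\tilde{M}$ is the column-centered matrix whose columns are the flattened signals; then $\mathrm{rank}(C) = \mathrm{rank}(\tilde{M})$, and $\mathrm{rank}(\tilde{M}) = 1$ iff all centered signals are scalar multiples of a single vector, i.e., perfectly collinear up to affine shifts.

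For Part 3, the intended formalization of ``stronger linear dependence implies smaller $\mathrm{DH}$'' is via Schur-concavity of the Shannon entropy. Concretely, if one alpha set $\{S^{(i)}\}$ has a spectrum $\{\lambda_i\}$ that majorizes the spectrum $\{\mu_i\}$ of another alpha set (i.e., variance is more concentrated on a few leading principal components in the first case), then Schur-concavity gives $H(\lambda/\|\lambda\|_1) \le H(\mu/\|\mu\|_1)$, and hence $\mathrm{DH}$ is smaller for the more concentrated, more collinear set. Equivalently, the participation ratio $\exp(H(p))$ serves as an effective rank: as signals approach a lower-dimensional subspace, fewer eigenvalues dominate, $p$ becomes more peaked, and $\mathrm{DH}$ shrinks. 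I would also note the boundary behavior: $\mathrm{DH}=1$ iff all $\lambda_i$ are equal (signals maximally isotropic), while $\mathrm{DH}\to 0$ as variance collapses onto a single axis.

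The routine pieces are Parts 1 and 2: they reduce to textbook entropy bounds and the rank--nonzero-eigenvalue correspondence. The main obstacle is Part 3, because ``stronger linear dependence'' is not a single scalar quantity, so the claim must be phrased carefully. The cleanest remedy is to adopt the majorization order on normalized spectra as the formal definition of ``more multicollinear,'' after which Schur-concavity of $-\sum p_i \log p_i$ delivers the monotonicity without calculation. A secondary subtlety is handling the degenerate case $\mathrm{tr}(C)=0$ (all signals constant), which I would exclude at the outset as an uninformative edge case outside the regime of interest.
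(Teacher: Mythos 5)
Your proposal is correct, and for Parts~1 and~2 it follows essentially the same route as the paper: $\{p_i\}$ is a probability distribution because $C$ is positive semidefinite, the normalized Shannon entropy lies in $[0,1]$, and $\mathrm{DH}=0$ iff the spectrum is a point mass iff $\mathrm{rank}(C)=1$. Two places where you go beyond the paper are worth noting. First, in Part~2 you explicitly factor $C$ through the \emph{centered} data matrix and observe that rank-one covariance means the centered signals span a line, i.e.\ the raw signals are collinear only up to affine shifts; the paper states ``perfectly collinear'' without this qualification, so your version is the more precise reading of the equivalence (and quietly repairs a small imprecision in the statement itself). You also exclude the degenerate case $\mathrm{tr}(C)=0$, which the paper does not mention. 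Second, for Part~3 the paper offers only an informal argument---strong multicollinearity concentrates the spectrum, a skewed spectrum has lower entropy---whereas you formalize the claim by taking majorization of the normalized spectra as the definition of ``more multicollinear'' and invoking Schur-concavity of Shannon entropy, with the participation ratio $\exp(H(p))$ as an effective-rank interpretation. This buys a rigorous monotonicity statement at the cost of fixing a particular partial order; the paper's looser phrasing avoids that commitment but proves nothing precise. Your identification of Part~3's vagueness as the main obstacle, and the majorization remedy, is exactly the right diagnosis.
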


\begin{proof}
\textbf{(1)} The set $\{p_i\}$ forms a valid probability distribution since $\lambda_i \ge 0$ and $\sum_i p_i = 1$. The normalized Shannon entropy of any such distribution lies in $[0, 1]$.

\textbf{(2)} If $\mathrm{rank}(C) = 1$, then $C$ has only one non-zero eigenvalue, say $\lambda_1 > 0$ and $\lambda_{2}, \dots, \lambda_m = 0$. Then $p_1 = 1$ and $p_i = 0$ for $i > 1$, so $\mathrm{DH} = 0$.

Conversely, if $\mathrm{DH} = 0$, then the entropy of $\{p_i\}$ is zero, which only occurs when all the probability mass is concentrated at a single index, i.e., $p_i = 1$ for some $i$, implying all other eigenvalues are zero. Thus $C$ is rank 1 and signals are linearly dependent.

\textbf{(3)} When multicollinearity is strong, the signal vectors lie close to a low-dimensional subspace, and $C$ becomes ill-conditioned or low-rank. Its eigenvalue spectrum becomes more skewed (i.e., concentrated), reducing entropy. Thus, lower DH reflects stronger redundancy among signals.
\end{proof}

\section{Details of LLMs}
\label{appendix:llm}
In this paper, for the implementation of FAMA and AlphaAgent, we use GPT-4o as the base model. For LLM used in FAMA, we set the $\text{max\_tokens}=500,\text{temperature=0.5}$. For Idea Agent in AlphaAgent, we set the  $\text{temperature=1.0}$. For Factor Agent in AlphaAgent, we set the $\text{temperature=0.3}$. For Eval Agent in AlphaAgent, we set the $\text{temperature=0.4}$.

For the implementation of Logic Score, we use GPT-4o to judge the logic of alpha. We set the $\text{max\_tokens}=1000,\text{temperature=0.2}$, the prompt is following:
\begin{quote}
    Below is a set of quantitative factor expressions designed using qlib syntax.
    
    Please score each factor from 50 to 100 based on the rationality of financial market logic (full score), and provide the corresponding logical explanation.
    
    When scoring, differences in scores can be larger: logical factors can receive very high scores, and vice versa.
    
    Factor list: \{factor\_expressions\}
    
    Please return \textbf{a pure JSON array only}, without any Markdown code blocks. "
    
    The array length should match the factor list, and each element should be an object containing:
    \begin{itemize}
        \item factor: the factor expression
        \item score: numeric score (50–100)
        \item explanation: a brief logical explanation
    \end{itemize}
\end{quote}

\end{document}